\documentclass[conference]{IEEEtran}
\IEEEoverridecommandlockouts
\usepackage{cite}
\usepackage{amsmath,amssymb,amsfonts}
\usepackage{algorithmic}
\usepackage{graphicx}
\usepackage{textcomp}
\usepackage{xcolor}
\usepackage{booktabs}
\usepackage{tabularx}
\usepackage{stfloats}
\graphicspath{{escape_aware_cbf/}}
\usepackage{amsthm}
\newtheorem{lemma}{Lemma}
\newtheorem{proposition}{Proposition}
\newtheorem{theorem}{Theorem}
\newtheorem{corollary}{Corollary}
\newtheorem{remark}{Remark}
\newtheorem{assumption}{Assumption}
\def\BibTeX{{\rm B\kern-.05em{\sc i\kern-.025em b}\kern-.08em
    T\kern-.1667em\lower.7ex\hbox{E}\kern-.125emX}}
\usepackage{caption}
\usepackage{subcaption}

\begin{document}

\title{Escape-Aware Control Barrier Functions for\\
Quadrotor Safety under Body-Rate Limits
}

\author{
\IEEEauthorblockN{Lei Shi\textsuperscript{1}, Haosong Wen\textsuperscript{2}, Qichao Liu\textsuperscript{3,*}}
\thanks{\textsuperscript{1}University of Wisconsin--Madison.
\textsuperscript{2}Southeast University.
\textsuperscript{3}The Hong Kong Polytechnic University.
\textsuperscript{*}Corresponding author: Qichao Liu.}
}

\maketitle

\begin{abstract}
Control barrier functions for input-constrained systems place the admissible
input set inside the definition of the safe set, yet the resulting barrier is
almost always a function of the state alone. On a quadrotor this is not
cosmetic: because the thrust vector must be reoriented before it can decelerate
an approach, and reorientation is limited by the attainable body rate, a
state-only barrier certifies states from which no escape is reachable in time.
We characterize the certification gap in closed form and show its width is
proportional to closing speed and inversely proportional to the body-rate
limit. We then define an escape barrier on the augmented pair of state and
previously applied input, with escape authority measured over the one-step
reachable thrust cap. It admits a closed form and an analytic inverse for the
maximum certifiable closing speed, and embeds in a predictive controller at no
additional state cost. Across $550$ paired closed-loop episodes on a $13$-state quadrotor, the
proposed controller completes every tested scenario, whereas the
stopping-distance barrier enforced over the same horizon fails $15\%$ and
$25\%$ of episodes in exactly the two scenarios that enter the predicted gap.
Against an online backup-CBF baseline enforcing the same escape condition at
the reached state, it holds a $29$--$74^\circ$ larger directional margin and
$3$--$18$ times the clearance, and an independent conservative rollout referee
finds no certified state from which escape fails.

\end{abstract}

\begin{IEEEkeywords}
control barrier functions, model predictive control, aerial robotics, safety-critical control
\end{IEEEkeywords}

\section{Introduction}

Many control barrier function (CBF) based methods formulate quadrotor safety
as geometric clearance: remain outside a prescribed margin around obstacles.
This is inexpensive, combines readily with model predictive control (MPC), and
extends to multiple obstacles. But clearance alone does not characterize
safety, because remaining safe also depends on whether the thrust direction can
be reoriented toward the escape direction in time. A quadrotor maneuvers by
changing the magnitude and direction of its thrust, and direction is set by
attitude: magnitude changes quickly, direction must be slewed and is limited by
the attainable body rate. The vehicle therefore cannot brake instantaneously
along the obstacle normal --- it must first rotate the thrust, and clearance is
consumed throughout (Fig.~\ref{fig:gap}). The question is not how much
clearance remains, but whether it suffices for an escape feasible under the
reachable thrust-direction constraint.

Escape-based methods decide safety by the existence of a feasible escape
rather than instantaneous distance. Inevitable collision
states~\cite{fraichard2003}, backup CBFs~\cite{gurriet2018,chen2021},
Hamilton--Jacobi reachability~\cite{choi2021}, and closed-form
input-constrained barriers~\cite{breeden2023,wang2017} all fold control
authority into certification, but to stay tractable for feedback they typically
collapse the input constraint into a state-only barrier independent of $u^-$.
That is reasonable for point-mass models, where the braking direction is
history-independent; for a quadrotor the current thrust direction decides
whether the escape direction is reachable in time. A set of states then
satisfies the stopping-distance barrier $h_I(x)\geq 0$ yet admits no input that
reorients and brakes before collision under a bounded body rate. We
therefore propose an escape-aware CBF on the augmented pair $(x,u^-)$, built
from the one-step reachable thrust set induced by the previously applied input,
which splits an escape into reorientation followed by aligned braking.

\begin{figure}[t]
    \centering
    \includegraphics[width=0.55\linewidth]{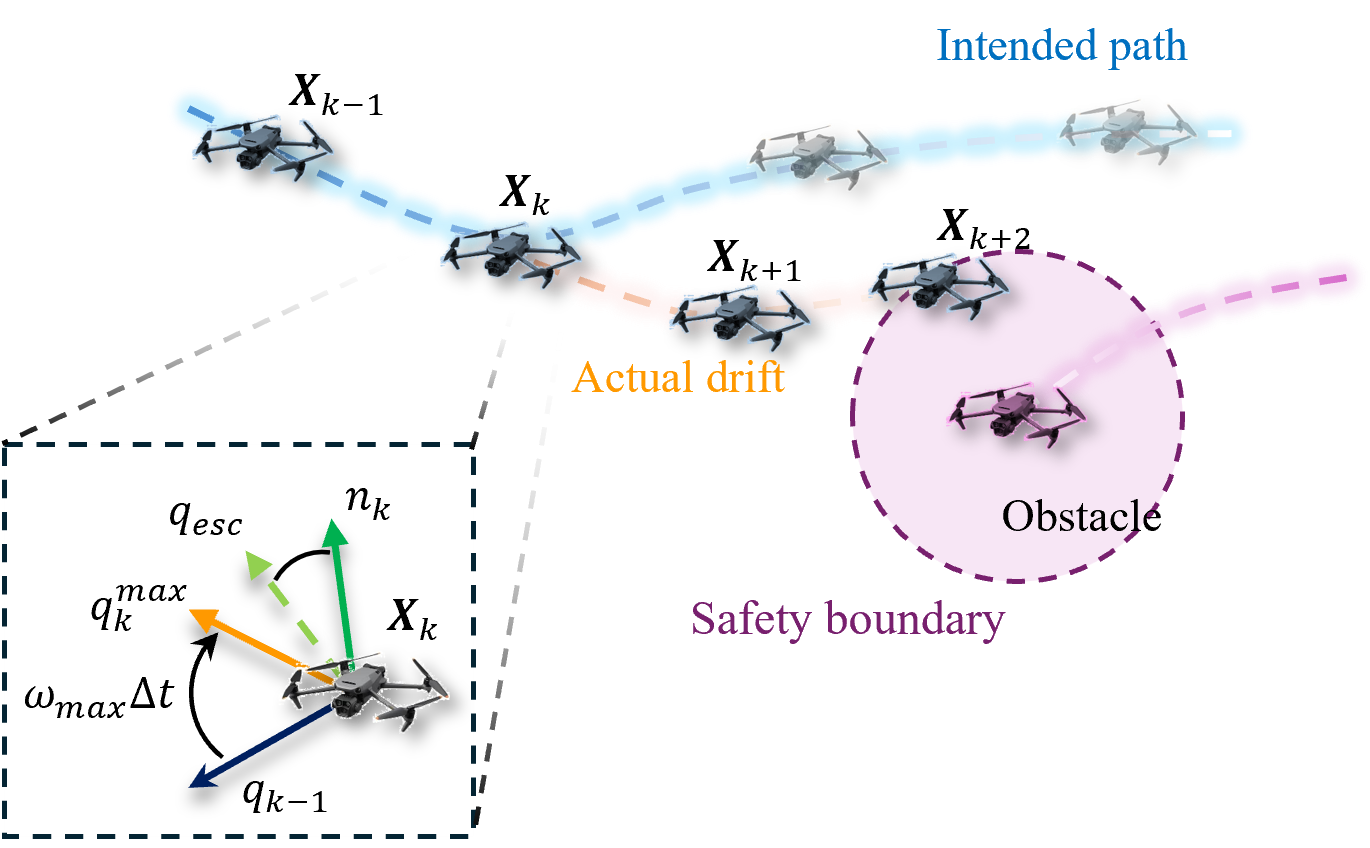}
\vspace{-2mm}
    \caption{Thrust-reorientation drift. The intended path is geometrically safe,
    but the thrust cannot align with the escape direction within one control
    interval, so the trajectory drifts toward the boundary before braking is
    available.}
    \label{fig:gap}
\vspace{-5mm}
\end{figure}

The main contributions of this paper are summarized as:
\begin{enumerate}
\item \textbf{A closed-form certification gap.} We prove that the stopping-distance barrier $h_I$, equivalently the backup CBF
induced by a maximum-braking policy~\cite{chen2021}, certifies a nonempty set
of states from which no body-rate-feasible escape exists, with gap width
$\Delta d_{\mathrm{gap}}(\phi)=\max\{0,\phi-\pi/2\}\,v_{\mathrm{cl}}/\omega_{\max}$.
It grows linearly in closing speed, vanishes only as
$\omega_{\max}\!\to\!\infty$, and is invisible to any barrier that is a
function of $\mathbf{x}$ alone, so no constant margin absorbs it.

\item \textbf{An escape certificate on $(\mathbf{x},u^-)$ with a closed form
and an analytic inverse.} Built from the one-step reachable thrust cap induced
by the previously applied input, it assigns different safety values to states
with identical position, velocity, and clearance but different thrust
directions --- a distinction no state-only barrier expresses. Unlike backup and
implicit CBFs, which roll out a backup policy numerically, ours is algebraic:
$h_{\mathrm{esc}}$ in closed form, an analytic inverse for the maximum
certifiable closing speed, a proof that the certified set is contained in the escapable set under
the stated reorientation and drift assumptions (Theorem~\ref{thm:conservatism}), and exact reduction to the isotropic braking bound as $\theta\!\to\!\pi$.

\item \textbf{Zero-cost embedding in MPC, validated against a backup-CBF
oracle.} Because MPC already optimizes an input sequence, the dependence on
$u^-$ propagates along the horizon with no state augmentation. On a 13-state
plant with motor lag and a cascaded attitude loop, the controller holds
significantly larger clearance and directional margin than a geometric
MPC-CBF~\cite{zeng2021}, stays feasible where a constant-margin
filter~\cite{mandaokar2026} is infeasible from the first step, and beats a
rate-feasible backup CBF~\cite{gurriet2018,chen2021,singletary2022} flown
online. Internal ablations isolate two variables: $h_I$ imposed over the same
horizon still fails $15$--$25\%$ of episodes in the two scenarios that enter
the gap, and our own certificate imposed at one node instead of $20$ loses the
margin. The
enforcement locus is what the margin depends on, and the closed form is what
makes that locus affordable.
\end{enumerate}

\section{Related Work}
\label{sec:related}
\subsection{Escape-Based Safety Certificates}

Escape-based certificates define safety through the existence of future
admissible motions rather than instantaneous clearance. Fraichard and
Asama~\cite{fraichard2003} introduced inevitable collision states.
\emph{Difference:} it is a set-membership characterization, providing no scalar
barrier, Lie-derivative condition, or forward-invariance certificate for online
control.

A second line places actuator limits algebraically inside the certificate.
Wang~\emph{et al.}~\cite{wang2017} couple pairwise distance and relative
velocity through a maximum braking capability. Breeden and
Panagou~\cite{breeden2023} give the most developed form, with a certified set
defined through an explicitly parameterized evading maneuver at the input
bound. \emph{Difference:} that maneuver is charged against a bound on input
\emph{magnitude} and assumed instantaneously available at every state, its
admissibility independent of the previously applied input. We charge the
reorientation \emph{time} needed to make the braking direction admissible, and
our certified set is a function of $(\mathbf{x},u^-)$: for a quadrotor the
limiting factor is not the magnitude of available braking but its direction, so
two states with identical position, velocity, and clearance can differ in
escape capability.

\subsection{Backup and Implicit Safety Certificates}

Backup and implicit CBFs certify states whose finite-time rollout under a
prescribed backup policy stays inside the constraint set;
Chen~\emph{et al.}~\cite{chen2021} observe that the stopping-distance barrier
is the backup CBF induced by a maximum-braking policy. Singletary~\emph{et al.}~\cite{singletary2022} run backup CBFs onboard racing
quadrotors under thrust and attitude-rate limits --- the same actuation limits
we study. \emph{Difference:} their certificate is evaluated by numerically
integrating a fixed backup controller from the current \emph{state}, returning
a scalar verdict rather than a differentiable function of the control decision,
so it can filter a candidate input but cannot shape an optimization over a
horizon. We derive the escape distance algebraically on $(\mathbf{x},u^-)$, so
the identical condition can be imposed at every prediction node. The
distinction is not cost --- Sec.~\ref{subsec:closed_loop} measures the
backup-CBF controller as no slower than ours --- but that a one-step filter
rejects an input only once the state requiring rejection is reached. We use
such a rollout as the conservative referee of Sec.~\ref{subsec:baselines}, fly
it as B3, and move our own certificate to the same single-node locus in
Sec.~\ref{subsec:ablation}. Projection-based
extensions handle mixed state--input constraints~\cite{gacsi2026}.
\emph{Difference:} \cite{gacsi2026} composes backup policies over constraints
on the instantaneous pair $(\mathbf{x},u)$ and does not bound how far $u$ may
move from $u^-$ in one interval --- precisely the constraint generating our
gap.

A third route computes the escapable set directly: Hamilton--Jacobi
reachability yields the exact viability kernel for a given input set, and
\cite{choi2021,tonkens2022} use the value function as, or to refine, a CBF.
\emph{Difference:} the offline PDE is solved over a fixed state space, so the
input set must depend on the state alone; a $u^-$-dependent
$\mathcal{U}_1(u^-)$ would lift the grid from six to nine dimensions. Ours is
not a tighter set, but a closed-form certificate for that set.

\subsection{Margin-Based Safety for Aerial Vehicles}

Aerial avoidance must account for latency, actuator limits, obstacle motion,
and estimation uncertainty, so many methods inflate obstacles. The
distributionally robust acceleration barrier filter of~\cite{mandaokar2026}
folds latency, actuator and jerk limits, obstacle acceleration, and perception
uncertainty into an effective clearance. \emph{Difference:} such margins are set by worst-case
constants and are direction-independent. We compute a direction-dependent
escape authority from the one-step reachable thrust set induced by $u^-$.

Two lines relax that constant. Liu and Wang~\cite{liu2026adaptive} size an
MPC-embedded CBF to the observable region. \emph{Difference:} their margin
adapts to what is \emph{perceived}, ours to what is \emph{actuable} from $u^-$.
Shi~\emph{et al.}~\cite{shi2024fairness} enrich the barrier with obstacle
shape, velocity, and orientation, and distribute control authority across
robots. \emph{Difference:} there authority is scheduled between agents; here it
is thrust authority reachable within one interval on one vehicle.

\section{Problem Formulation}
\label{sec:problem}
\subsection{Quadrotor Dynamics and Input Reachability}
\label{subsec:dynamics_input_reachability}

With $\mathbf{p},\mathbf{v}\in\mathbb{R}^3$ the position and velocity,
$\mathbf{x}:=(\mathbf{p},\mathbf{v})$, and $\mathbf{e}_3=[0,0,1]^\top$, the
control-oriented translational dynamics are
\begin{equation}
\dot{\mathbf{p}} = \mathbf{v},\qquad
\dot{\mathbf{v}} = m^{-1} f\mathbf{q} - g\mathbf{e}_3 + \mathbf{w},
\label{eq:quad_dynamics}
\end{equation}
with $m$ the mass, $g$ gravity, and $\mathbf{w}\in\mathcal{W}$ a bounded
disturbance. The input is
\begin{equation}
u := (f,\mathbf{q})\in [f_{\min},f_{\max}]\times \mathbb{S}^2,
\label{eq:input_def}
\end{equation}
with $f$ the thrust magnitude and $\mathbf{q}$ its direction.

The thrust direction is set by attitude and therefore cannot change
arbitrarily between control intervals. With $u^-=(f^-,\mathbf{q}^-)$ the
previously applied input, thrust-rate and body-rate limits give the one-step
reachable input set
\begin{equation}
\mathcal{U}_1(u^-)
=
\left\{
(f,\mathbf{q}) :
\begin{aligned}
& |f-f^-| \leq \dot f_{\max}\Delta t,\quad
f\in[f_{\min},f_{\max}],\\
& \angle(\mathbf{q},\mathbf{q}^-)\leq \theta
\end{aligned}
\right\},
\label{eq:one_step_input_set}
\end{equation}
where $\Delta t$ is the outer sampling period, $\dot f_{\max}$ the
thrust-rate bound, and $\theta$ the maximum thrust-direction change achievable
in one interval. This is not an isotropic acceleration ball but a cap oriented
by $\mathbf{q}^-$. The angle $\theta$ is not $\omega_{\max}\Delta t$: the
attitude response has a finite lag, so
$\theta \leq \omega_{\max}(\Delta t-t_{\mathrm{lag}})
< \omega_{\max}\Delta t$,
with $\omega_{\max}$ the sustained slew rate and $t_{\mathrm{lag}}$ the
closed-loop attitude-response lag. The analysis uses only the following
identified property.

\begin{assumption}[Identified reorientation]
\label{asm:reorient}
From any admissible attitude, the thrust axis can be rotated onto a target
direction $\psi$ radians away within
$t_{\mathrm{lag}}
+
{\max\{0,\psi-\theta\}}/{\omega_{\max}}$.
\end{assumption}

For each obstacle $\mathcal{O}_i$ let $d_i(\mathbf{p})$ be the signed distance
to its boundary, so the free distance is
$d_{\mathrm{free}}(\mathbf{p})
=
\min_i d_i(\mathbf{p})$,
and the corresponding most critical obstacle is
$i^\star(\mathbf{p})
=
\arg\min_i d_i(\mathbf{p})$.

Let $\mathbf{n}(\mathbf{p})$ denote the outward unit normal at the boundary
of $\mathcal{O}_{i^\star}$, and let $\mathbf{v}_o$ be the velocity of that
obstacle at its closest point. All escape quantities are evaluated in the frame
of the closest obstacle, so the closing speed is
\begin{equation}
v_{\mathrm{cl}}(\mathbf{x})
=
\max\{0,-\mathbf{n}^{\top}(\mathbf{p})(\mathbf{v}-\mathbf{v}_o)\},
\label{eq:closing_speed}
\end{equation}
and the thrust-direction misalignment relative to the escape direction is
$\phi
=
\angle(\mathbf{n}(\mathbf{p}),\mathbf{q}^-)$.

\begin{remark}[Moving obstacles]
\label{rem:moving_obstacle}
Eq.~\eqref{eq:closing_speed} is exact for an obstacle at constant velocity. An
accelerating obstacle with $\|\mathbf{a}_o\|\leq\bar a_o$ enters the relative
dynamics along $\mathbf{n}$ exactly as an inward disturbance, so replacing
$\mathcal{W}$ by $\mathcal{W}\oplus\{\mathbf{a}:\|\mathbf{a}\|\leq\bar a_o\}$
inflates $a_{\mathrm{dist}}^{\mathrm{bound}}$ by at most $\bar a_o$ and every
statement below, Theorem~\ref{thm:conservatism} included, holds verbatim. What
is \emph{not} absorbed is rotation of $\mathbf{n}$ itself.
\end{remark}

Two states with identical $\mathbf{p}$, $\mathbf{v}$, and $d_{\mathrm{free}}$
can therefore have different escape authority when $\mathbf{q}^-$ differs,
which is the basis for the gap analyzed next.

\subsection{State-Only Stopping Barrier}
\label{subsec:state_only_stopping_barrier}

An isotropic braking abstraction yields the state-only stopping barrier
\begin{equation}
h_I(\mathbf{x})
=
d_{\mathrm{free}}(\mathbf{p})
-
(2a_{\max})^{-1}v_{\mathrm{cl}}^2(\mathbf{x}).
\label{eq:state_only_barrier}
\end{equation}
with $a_{\max}:=f_{\max}/m$, certifying a state when the free distance exceeds
the distance needed to dissipate the closing speed under maximum braking.

As Chen~\emph{et al.}~\cite{chen2021} note, $h_I$ is the backup CBF induced
by a maximum-braking policy, so it is a genuine state-only certificate under an
isotropic braking abstraction rather than a heuristic margin. Its limitation is
that it does not depend on $u^-$, and therefore assigns the same value to
states with identical position, velocity, and free distance but different
thrust directions --- equivalently, it assumes the braking direction is
instantaneously available. Under Eq.~\eqref{eq:one_step_input_set} that fails.

\subsection{State-Only Certification Gap under Body-Rate Limits}
\label{subsec:gap}

Eq.~\eqref{eq:state_only_barrier} assumes $a_{\max}$ is available immediately
along the escape direction, which fails under
Eq.~\eqref{eq:one_step_input_set}. Consider a horizontal encounter with
$\mathbf{n}$ fixed, no gravity projection along $\mathbf{n}$, and no
disturbance, and let $\psi(t):=\angle(\mathbf{q}(t),\mathbf{n})$. Under the
body-rate limit every feasible thrust-direction trajectory satisfies
\begin{equation}
\psi(t)
\geq
\max\{0,\phi-\omega_{\max}t\},
\label{eq:domination}
\end{equation}
with $\phi$ the initial misalignment. While $\psi(t)>\pi/2$ the thrust has no
positive projection along the escape direction and cannot reduce the closing
speed.

\begin{proposition}[Certification gap]
\label{prop:gap}
In the horizontal encounter above, suppose that the current thrust
direction is opposite to the escape direction, i.e., $\phi=\pi$. Then every
state satisfying
\begin{equation}
\frac{v_{\mathrm{cl}}^2}{2a_{\max}}
\leq
d_{\mathrm{free}}
<
\frac{v_{\mathrm{cl}}^2}{2a_{\max}}
+
\frac{\pi v_{\mathrm{cl}}}{2\omega_{\max}}
\label{eq:gap}
\end{equation}
is certified as safe by the state-only stopping barrier, but admits no
feasible escape maneuver before the available clearance is consumed under
the body-rate limit.
\end{proposition}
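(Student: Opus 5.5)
The plan has two parts. The certification claim follows directly from the definition of $h_I$. The infeasibility claim comes from a dead-phase argument built on the domination bound \eqref{eq:domination}.

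\emph{Certification.} The left inequality in \eqref{eq:gap} is exactly $h_I(\mathbf{x})\geq 0$ by \eqref{eq:state_only_barrier}. So every such state is certified by the stopping barrier, and nothing more is needed here.

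\emph{No escape.} Take $\phi=\pi$ and an arbitrary feasible thrust-direction trajectory. By \eqref{eq:domination}, $\psi(t)\geq \pi-\omega_{\max}t$, so $\psi(t)>\pi/2$ for all $t<T_0:=\pi/(2\omega_{\max})$. On $[0,T_0)$ the thrust therefore satisfies $\mathbf{n}^\top\mathbf{q}(t)\leq 0$. With no gravity projection along $\mathbf{n}$ and no disturbance, the relative normal acceleration $\frac{d}{dt}\mathbf{n}^\top(\mathbf{v}-\mathbf{v}_o)=m^{-1}f\,\mathbf{n}^\top\mathbf{q}$ is nonpositive. Hence the closing speed cannot decrease, $v_{\mathrm{cl}}(t)\geq v_{\mathrm{cl}}$, and the clearance consumed during this dead phase is at least $v_{\mathrm{cl}}T_0=\pi v_{\mathrm{cl}}/(2\omega_{\max})$.

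For $t\geq T_0$, the normal deceleration is at most $a_{\max}\cos\psi(t)\leq a_{\max}$. A comparison argument then gives a stopping distance from closing speed $v\ge v_{\mathrm{cl}}$ of at least $v_{\mathrm{cl}}^2/(2a_{\max})$. Concretely, I would write the energy-type inequality $\frac{d}{dt}(v_{\mathrm{cl}}^2)\geq -2a_{\max}v_{\mathrm{cl}}$ and integrate it in terms of distance, so that $d(\mathbf{n}\text{-distance})/dv \ge v/a_{\max}$. Adding the two phases, the total normal travel before $v_{\mathrm{cl}}$ reaches zero is at least $v_{\mathrm{cl}}^2/(2a_{\max})+\pi v_{\mathrm{cl}}/(2\omega_{\max})$. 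By the right inequality of \eqref{eq:gap}, this exceeds $d_{\mathrm{free}}$.

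Since $\mathbf{n}$ is fixed and the obstacle frame is used, the signed distance decreases at rate $v_{\mathrm{cl}}$. The distance therefore reaches zero (or the boundary) before closing motion stops, for every feasible input trajectory. This means no feasible escape exists.

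The main obstacle is making the phase-splitting rigorous for an arbitrary (not bang-bang) trajectory. The infimum of travel must be realized by the extremal policy of a dead phase of length exactly $T_0$ followed by full aligned braking, and the partial braking available during the rotation phase right after $T_0$, where $\cos\psi<1$, only helps the bound. I would handle this with a comparison lemma: $\mathbf{n}^\top\mathbf{q}(t)\leq \mathbf{1}_{t\geq T_0}$ pointwise, so the relative normal velocity dominates that of the extremal system, and distance traveled dominates by integration. I would also note the subtlety that once $v_{\mathrm{cl}}$ hits zero the $\max$ in \eqref{eq:closing_speed} is inactive. This is harmless because contact is shown to occur strictly before that time.
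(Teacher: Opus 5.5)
Your proof is correct and follows the paper's own argument. The left inequality of \eqref{eq:gap} is exactly $h_I\geq 0$; \eqref{eq:domination} with $\phi=\pi$ forces a dead phase of length $\pi/(2\omega_{\max})$ in which the closing speed cannot drop; stopping afterwards from a speed of at least $v_{\mathrm{cl}}$ takes at least $v_{\mathrm{cl}}^2/(2a_{\max})$, so the right inequality leaves too little clearance. Your time-domain comparison with $\mathbf{n}^\top\mathbf{q}(t)\leq\mathbf{1}_{t\geq T_0}$ simply makes explicit the braking-phase step the paper states in one line, and it is the cleaner form since it does not need $v_{\mathrm{cl}}$ to be monotone, as the $ds/dv$ version would.
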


\begin{proof}
For $\phi=\pi$, Eq.~\eqref{eq:domination} gives $\psi(t)>\pi/2$ for
$t<\pi/(2\omega_{\max})$, so the thrust has no positive projection along
$\mathbf{n}$ and the closing speed cannot be reduced; at least
$\pi v_{\mathrm{cl}}/(2\omega_{\max})$ of clearance is consumed before braking
begins. The lower bound in Eq.~\eqref{eq:gap} gives $h_I\geq 0$, the upper
bound makes the available clearance smaller than the stopping distance plus the
reorientation distance, so no escape completes before the clearance is
exhausted.
\end{proof}

The gap width follows from Eq.~\eqref{eq:gap}:
\begin{equation}
\Delta d_{\mathrm{gap}}
=
\left(
\frac{v_{\mathrm{cl}}^2}{2a_{\max}}
+
\frac{\pi v_{\mathrm{cl}}}{2\omega_{\max}}
\right)
-
\frac{v_{\mathrm{cl}}^2}{2a_{\max}}
=
\frac{\pi v_{\mathrm{cl}}}{2\omega_{\max}}.
\label{eq:gap_width}
\end{equation}

For fixed $v_{\mathrm{cl}}>0$ this is proportional to $1/\omega_{\max}$ and
vanishes as $\omega_{\max}\to\infty$. It is not small: at the identified
$\omega_{\max}=3.05~\mathrm{rad/s}$ the over-certified band is
$0.51\,v_{\mathrm{cl}}$ wide, $2.6~\mathrm{m}$ at $5~\mathrm{m/s}$. The same
argument applies at any
misalignment, which makes the gap a set of positive measure rather than one
configuration.

\begin{corollary}[Gap at arbitrary misalignment]
\label{cor:gap_phi}
In the horizontal encounter of Proposition~\ref{prop:gap}, let the initial
thrust misalignment be $\phi\in[0,\pi]$. Every state satisfying
\begin{equation}
\begin{split}
\frac{v_{\mathrm{cl}}^2}{2a_{\max}} \leq d_{\mathrm{free}} &< \frac{v_{\mathrm{cl}}^2}{2a_{\max}} + \Delta d_{\mathrm{gap}}(\phi), \\
\Delta d_{\mathrm{gap}}(\phi) &= \frac{\max\{0,\phi-\tfrac{\pi}{2}\}\,v_{\mathrm{cl}}}{\omega_{\max}}
\end{split}
\label{eq:gap_phi}
\end{equation}
satisfies $h_I(\mathbf{x})\geq 0$ but admits no body-rate-feasible escape.
\end{corollary}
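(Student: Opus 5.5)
The plan is to repeat the argument of Proposition~\ref{prop:gap} with the dead time depending on $\phi$. We split into two cases.

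\emph{Case $\phi\le\pi/2$.} Here $\Delta d_{\mathrm{gap}}(\phi)=0$, so the interval in Eq.~\eqref{eq:gap_phi} is empty and the claim holds vacuously.

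\emph{Case $\phi>\pi/2$.} First, the lower inequality $d_{\mathrm{free}}\ge v_{\mathrm{cl}}^2/(2a_{\max})$ is exactly $h_I(\mathbf{x})\ge 0$ by Eq.~\eqref{eq:state_only_barrier}. This settles the certification half.

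Next I will bound the braking dead time. By the domination bound, Eq.~\eqref{eq:domination}, every body-rate-feasible thrust trajectory satisfies $\psi(t)\ge\phi-\omega_{\max}t$. Hence $\psi(t)>\pi/2$ on the whole interval $[0,T_\phi)$, where $T_\phi:=(\phi-\pi/2)/\omega_{\max}$.

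On that interval, $\mathbf{n}^\top\mathbf{q}(t)=\cos\psi(t)<0$. In the horizontal, disturbance-free encounter the component of $\dot{\mathbf{v}}$ along $\mathbf{n}$ is $m^{-1}f\cos\psi\le 0$. So the closing speed is nondecreasing on $[0,T_\phi)$, with $v_{\mathrm{cl}}(t)\ge v_{\mathrm{cl}}(0)$.

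The clearance consumed during the dead time is therefore at least
\[
\int_0^{T_\phi} v_{\mathrm{cl}}(t)\,dt \;\ge\; T_\phi\,v_{\mathrm{cl}} \;=\; \frac{(\phi-\pi/2)\,v_{\mathrm{cl}}}{\omega_{\max}} \;=\; \Delta d_{\mathrm{gap}}(\phi).
\]
This uses that $\mathbf{n}$ is fixed, so the distance decreases at rate $v_{\mathrm{cl}}$ as long as $v_{\mathrm{cl}}>0$.

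After $T_\phi$, the braking deceleration along $\mathbf{n}$ is at most $f_{\max}\cos\psi/m\le a_{\max}$. The remaining stopping distance is then at least $v_{\mathrm{cl}}(T_\phi)^2/(2a_{\max})\ge v_{\mathrm{cl}}^2/(2a_{\max})$.

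Adding the two contributions, any feasible maneuver that stops before contact needs
\[
d_{\mathrm{free}}\;\ge\; \frac{v_{\mathrm{cl}}^2}{2a_{\max}}+\Delta d_{\mathrm{gap}}(\phi),
\]
which contradicts the strict upper inequality in Eq.~\eqref{eq:gap_phi}. So no body-rate-feasible escape exists. As a check, setting $\phi=\pi$ recovers Proposition~\ref{prop:gap}.

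The main obstacle is making "escape" precise. The dead-time bound only covers motion along $\mathbf{n}$, so I will fix the notion of escape as driving the normal closing speed to zero before $d_{\mathrm{free}}$ reaches zero, with $\mathbf{n}$ fixed. Under that definition the one-dimensional energy argument above is rigorous.

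A subtler point is that $\psi(t)$ could dip below $\pi/2$ and then return. This only helps braking, so the bound on the post-$T_\phi$ phase, which is the worst case with full deceleration $a_{\max}$, remains valid. I also need the rate bound $\omega_{\max}$ in Eq.~\eqref{eq:domination} to hold from $t=0$, ignoring $t_{\mathrm{lag}}$. Keeping the lag would only lengthen the dead time and widen the gap, so the stated band is still over-certified.
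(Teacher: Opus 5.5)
Your proof is correct and follows the paper's route. The paper likewise uses Eq.~\eqref{eq:domination} to get $\psi(t)>\pi/2$ on $[0,(\phi-\pi/2)/\omega_{\max})$, charges at least $v_{\mathrm{cl}}(\phi-\pi/2)/\omega_{\max}$ of clearance to that interval, reruns Proposition~\ref{prop:gap} with this interval in place of $\pi/(2\omega_{\max})$, and notes the band is empty for $\phi\le\pi/2$; your version just makes explicit the monotonicity of $v_{\mathrm{cl}}$ during the dead time and the $a_{\max}$ cap on the subsequent braking, both of which the paper leaves implicit.
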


\begin{proof}
By Eq.~\eqref{eq:domination}, $\psi(t)>\pi/2$ for
$t<(\phi-\pi/2)/\omega_{\max}$, so at least
$v_{\mathrm{cl}}(\phi-\pi/2)/\omega_{\max}$ of clearance is consumed before
braking begins, and Proposition~\ref{prop:gap} applies with
$\pi/(2\omega_{\max})$ replaced by $(\phi-\pi/2)/\omega_{\max}$. For
$\phi\leq\pi/2$ the interval is empty.
\end{proof}

Two features are important. First, the gap is zero for $\phi\leq\pi/2$ and
then grows linearly, so the state-only barrier is exact only when the thrust
already points outward; this direction-dependent degradation cannot be
reproduced by any $\phi$-independent margin. Second,
Eq.~\eqref{eq:gap_phi} is a lower bound: it charges only the sustained slew
and ignores both $t_{\mathrm{lag}}$ and the inward acceleration produced while
$\psi>\pi/2$. Quadrotor certification therefore cannot depend on $\mathbf{x}$ alone, but
must account for the thrust-direction reachability induced by $u^-$.
\section{Escape-Aware Barrier Construction}
\label{sec:method}
\subsection{One-Step Reachable Thrust Set}
\label{subsec:cap}

The certificate must therefore retain thrust-direction reachability. We first
characterize the accelerations generatable within one interval from $u^-$:
\begin{equation}
\mathcal{A}_1(u^-)
=
\left\{
\frac{f}{m}\mathbf{q}-g\mathbf{e}_3:
f\in[f'_{\min},f'_{\max}],\;
\angle(\mathbf{q},\mathbf{q}^-)\leq\theta
\right\},
\label{eq:reachable_acc}
\end{equation}
where the one-step reachable thrust-magnitude interval is
\begin{equation}
\begin{aligned}
f'_{\max} &= \min\{f_{\max},\,f^-+\dot f_{\max}\Delta t\},\\
f'_{\min} &= \max\{f_{\min},\,f^--\dot f_{\max}\Delta t\}.
\end{aligned}
\label{eq:thrust_window}
\end{equation}

$\mathcal{A}_1(u^-)$ is anisotropic and oriented by $\mathbf{q}^-$. The barrier
below inherits this structure, and with it the ability to separate states that
differ only in thrust orientation.

\subsection{Directional Escape Authority}
\label{subsec:authority}

The reachable set in Eq.~\eqref{eq:reachable_acc} bounds the acceleration
available along the escape direction. We define the one-step escape authority
as the largest outward normal acceleration reachable from $u^-$:
\begin{equation}
a_{\mathrm{esc}}^{\mathrm{avail}}(\mathbf{x},u^-)
=
\max_{\mathbf{a}\in\mathcal{A}_1(u^-)}
\mathbf{n}^{\top}(\mathbf{p})\mathbf{a}.
\label{eq:escape_authority_def}
\end{equation}
Because the input set separates the magnitude interval from the direction cap,
Eq.~\eqref{eq:escape_authority_def} evaluates in closed form. For fixed $f$ the
optimal direction rotates $\mathbf{q}^-$ toward $\mathbf{n}(\mathbf{p})$ as far
as the angular budget allows:
\begin{equation}
\max_{\angle(\mathbf{q},\mathbf{q}^-)\leq\theta}
\mathbf{n}^{\top}(\mathbf{p})\mathbf{q}
=
c(\phi),
c(\phi)
:=
\cos\left(\max\{0,\phi-\theta\}\right),
\label{eq:direction_factor}
\end{equation}
where $\phi=\angle(\mathbf{n}(\mathbf{p}),\mathbf{q}^-)$ is the current
thrust-direction misalignment.

The remaining scalar problem is linear in $f$, so the optimum is at an
endpoint:
\begin{equation}
a_{\mathrm{esc}}^{\mathrm{avail}}(\mathbf{x},u^-)
=
\frac{c(\phi)}{m}
\begin{cases}
f'_{\max}, & c(\phi)\geq 0,\\
f'_{\min}, & c(\phi)<0,
\end{cases}
-
g\,\mathbf{n}^{\top}(\mathbf{p})\mathbf{e}_3 .
\label{eq:escape_authority}
\end{equation}
Three properties follow: it depends explicitly on $\phi$, hence on
$\mathbf{q}^-$; when $\phi-\theta>\pi/2$ no reachable direction has positive
projection along the escape direction, the regime of
Proposition~\ref{prop:gap}; and as $\theta\to\pi$ we recover the isotropic
abstraction. Subtracting the worst inward disturbance projection,
\begin{equation}
a_{\mathrm{dist}}^{\mathrm{bound}}(\mathbf{x})
=
\max_{\mathbf{w}\in\mathcal{W}}
\left(-\mathbf{n}^{\top}(\mathbf{p})\mathbf{w}\right).
\label{eq:disturbance_projection}
\end{equation}
The resulting net escape capability is
\begin{equation}
a_{\mathrm{eff}}(\mathbf{x},u^-)
=
a_{\mathrm{esc}}^{\mathrm{avail}}(\mathbf{x},u^-)
-
a_{\mathrm{dist}}^{\mathrm{bound}}(\mathbf{x}).
\label{eq:effective_escape_acc}
\end{equation}
This is the certified outward acceleration after the worst admissible
disturbance. The certificate of Sec.~\ref{subsec:barrier} is built from
$a_{\mathrm{eff}}$: both quantities appearing there are instances of
Eq.~\eqref{eq:effective_escape_acc} at different misalignments. Writing
$a_{\mathrm{eff}}(\phi)$ for the dependence through $c(\phi)$,
\begin{equation}
a_{\mathrm{align}}
=
a_{\mathrm{eff}}(\phi)\big|_{\phi\leq\theta},
\qquad
a_{\mathrm{drift}}
=
-\,a_{\mathrm{eff}}(\phi)\big|_{\phi=\pi,\;f=f'_{\min}} ,
\label{eq:authority_link}
\end{equation}
so $a_{\mathrm{align}}$ is the outward authority available once reorientation
completes and $a_{\mathrm{drift}}$ the inward authority the same set produces
at the lowest reachable thrust while misalignment is maximal. The certificate
charges the second over the reorientation interval and credits the first
afterwards.

\subsection{Escape Barrier and Analytic Inversion}
\label{subsec:barrier}

An escape has two phases: reorientation, then aligned braking. We charge the
clearance consumed during reorientation before computing braking distance. By
Assumption~\ref{asm:reorient},
\begin{equation}
t_{\mathrm{rot}}(\phi)
=
t_{\mathrm{lag}}
+
\frac{\max\{0,\phi-\theta\}}{\omega_{\max}},
\tau_{\mathrm{eff}}
=
\tau_d+t_{\mathrm{rot}}(\phi),
\label{eq:reorientation_time}
\end{equation}
where $\tau_d$ denotes the control delay. During $\tau_{\mathrm{eff}}$, no
effective braking is credited.

This bookkeeping needs one check. The identified response of
Sec.~\ref{subsec:setup}, $\psi(t)\approx\omega_{\max}\max\{0,t-t_{\mathrm{lag}}\}$,
makes a rotation by $\phi$ take $t_{\mathrm{lag}}+\phi/\omega_{\max}$, whereas
Assumption~\ref{asm:reorient} charges only
$t_{\mathrm{lag}}+\max\{0,\phi-\theta\}/\omega_{\max}$; the shortfall
$\theta/\omega_{\max}$ is paid for by the control delay.

\begin{lemma}[Reorientation budget]
\label{lem:budget}
Suppose $\theta\leq\omega_{\max}(\Delta t-t_{\mathrm{lag}})$ and
$\tau_d\geq\Delta t$. Then for every $\phi\in[0,\pi]$,
\begin{equation}
\tau_{\mathrm{eff}}(\phi)
\;\geq\;
t_{\mathrm{lag}}+\frac{\phi}{\omega_{\max}},
\label{eq:budget}
\end{equation}
i.e.\ the interval over which the certificate credits no braking is at least
the true time needed to align the thrust axis.
\end{lemma}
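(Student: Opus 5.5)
The plan is to expand $\tau_{\mathrm{eff}}$ using Eq.~\eqref{eq:reorientation_time} and compare it term by term with the right-hand side of Eq.~\eqref{eq:budget}. Substituting gives
\begin{equation*}
\tau_{\mathrm{eff}}(\phi)-t_{\mathrm{lag}}-\frac{\phi}{\omega_{\max}}
=\tau_d+\frac{\max\{0,\phi-\theta\}-\phi}{\omega_{\max}}.
\end{equation*}
The $t_{\mathrm{lag}}$ terms cancel, so the claim reduces to a scalar inequality in $\phi$.

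The first step uses the elementary bound $\max\{0,\phi-\theta\}\geq\phi-\theta$, valid for every $\phi\in[0,\pi]$. It yields the uniform lower bound
\begin{equation*}
\tau_{\mathrm{eff}}(\phi)-t_{\mathrm{lag}}-\frac{\phi}{\omega_{\max}}\geq\tau_d-\frac{\theta}{\omega_{\max}}.
\end{equation*}
This avoids splitting into the cases $\phi\leq\theta$ and $\phi>\theta$. If a case split is wanted for clarity, the case $\phi>\theta$ gives exactly $\tau_d-\theta/\omega_{\max}$. The case $\phi\leq\theta$ gives $\tau_d-\phi/\omega_{\max}\geq\tau_d-\theta/\omega_{\max}$.

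The second step is to show $\tau_d\geq\theta/\omega_{\max}$, using the two hypotheses. From $\theta\leq\omega_{\max}(\Delta t-t_{\mathrm{lag}})$ we get $\theta/\omega_{\max}\leq\Delta t-t_{\mathrm{lag}}$. Since $t_{\mathrm{lag}}\geq0$, this is at most $\Delta t$, and $\Delta t\leq\tau_d$ by the second hypothesis. This step is where the "shortfall is paid for by the control delay" remark is made precise.

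The only point needing care is the sign of $t_{\mathrm{lag}}$ and $\omega_{\max}>0$, both implicit in the physical setup of Assumption~\ref{asm:reorient}; I would state them explicitly. The first hypothesis also implicitly requires $\Delta t\geq t_{\mathrm{lag}}$ for $\theta\geq0$ to be possible.

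Finally, I would note that the identified response $\psi(t)\approx\omega_{\max}\max\{0,t-t_{\mathrm{lag}}\}$ reaches $\phi$ at time $t_{\mathrm{lag}}+\phi/\omega_{\max}$. This justifies reading the right-hand side of Eq.~\eqref{eq:budget} as the true alignment time.
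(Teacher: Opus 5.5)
Your proof is correct and follows the paper's argument. The paper splits into $\phi\leq\theta$ and $\phi>\theta$, reducing to $\tau_d\geq\phi/\omega_{\max}$ and $\tau_d\geq\theta/\omega_{\max}$ respectively, and closes both with the same chain $\theta/\omega_{\max}\leq\Delta t-t_{\mathrm{lag}}\leq\Delta t\leq\tau_d$ that you use; your bound $\max\{0,\phi-\theta\}\geq\phi-\theta$ simply folds the first case into the second, and stating $\omega_{\max}>0$, $t_{\mathrm{lag}}\geq 0$ explicitly makes visible what the paper leaves implicit.
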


\begin{proof}
If $\phi\leq\theta$, then $\tau_{\mathrm{eff}}=\tau_d+t_{\mathrm{lag}}$ and
Eq.~\eqref{eq:budget} reduces to $\tau_d\geq\phi/\omega_{\max}$, which holds
because
$\phi\leq\theta\leq\omega_{\max}(\Delta t-t_{\mathrm{lag}})<\omega_{\max}\Delta t
\leq\omega_{\max}\tau_d$. If $\phi>\theta$, then
$\tau_{\mathrm{eff}}=\tau_d+t_{\mathrm{lag}}+(\phi-\theta)/\omega_{\max}$ and
Eq.~\eqref{eq:budget} reduces to $\tau_d\geq\theta/\omega_{\max}$, which holds
because $\theta/\omega_{\max}\leq\Delta t-t_{\mathrm{lag}}\leq\Delta t\leq\tau_d$.
\end{proof}

Both are design conditions, not vehicle assumptions, and both hold for the
plant of Sec.~\ref{subsec:setup}:
$\theta=0.141\leq\omega_{\max}(\Delta t-t_{\mathrm{lag}})=0.161~\mathrm{rad}$
and $\tau_d=\Delta t$. Where they fail,
$\tau_{\mathrm{eff}}=\tau_d+t_{\mathrm{lag}}+\phi/\omega_{\max}$ is
unconditionally sound.

Once the thrust direction is aligned with the escape direction, the
available outward acceleration is
\begin{equation}
a_{\mathrm{align}}(\mathbf{x},u^-)
=
\frac{f'_{\max}}{m}
-
g\,\mathbf{n}^{\top}(\mathbf{p})\mathbf{e}_3
-
a_{\mathrm{dist}}^{\mathrm{bound}}(\mathbf{x}).
\label{eq:aligned_authority}
\end{equation}
We also upper-bound the inward drift accumulated before braking begins by
\begin{equation}
a_{\mathrm{drift}}(\mathbf{x},u^-)
=
\frac{f'_{\min}}{m}
+
g\big(\mathbf{n}^{\top}(\mathbf{p})\mathbf{e}_3\big)_+
+
a_{\mathrm{dist}}^{\mathrm{bound}}(\mathbf{x}).
\label{eq:drift_accel}
\end{equation}
The required escape distance is then
\begin{equation}
d_{\mathrm{esc}}(\mathbf{x},u^-)=\underbrace{v_{\mathrm{cl}}\tau_{\mathrm{eff}}+\tfrac{1}{2}a_{\mathrm{drift}}\tau_{\mathrm{eff}}^2}_{\text{no braking available}}
+\underbrace{\frac{\big(v_{\mathrm{cl}}+a_{\mathrm{drift}}\tau_{\mathrm{eff}}\big)^2}{2a_{\mathrm{align}}}}_{\text{braking}},
\label{eq:escape_distance}
\end{equation}
defined on the domain
$\mathcal{X}_{\mathrm{esc}}
=
\left\{
(\mathbf{x},u^-):
a_{\mathrm{align}}(\mathbf{x},u^-)>0
\right\}$.

With $d_{\mathrm{res}}=d_{\mathrm{free}}-d_0$ the clearance left after the
nominal margin $d_0$, the escape-aware barrier is
\begin{equation}
h_{\mathrm{esc}}(\mathbf{x},u^-)
=
d_{\mathrm{res}}(\mathbf{p})
-
d_{\mathrm{esc}}(\mathbf{x},u^-),
(\mathbf{x},u^-)\in\mathcal{X}_{\mathrm{esc}}.
\label{eq:escape_barrier}
\end{equation}
so $h_{\mathrm{esc}}\geq 0$ means the remaining clearance covers both
reorientation and braking. Since $t_{\mathrm{rot}}$, $a_{\mathrm{drift}}$, and
$a_{\mathrm{align}}$ are independent of $v_{\mathrm{cl}}$,
Eq.~\eqref{eq:escape_barrier} is quadratic in the closing speed; setting
$h_{\mathrm{esc}}=0$ and writing $A:=a_{\mathrm{align}}$,
$D:=a_{\mathrm{drift}}$, $\tau:=\tau_{\mathrm{eff}}$ gives
\begin{equation}
v_{\mathrm{cl}}^{\max}(d_{\mathrm{res}},\phi)
=
\sqrt{A}\sqrt{(A+D)\tau^2+2d_{\mathrm{res}}}
-
(A+D)\tau .
\label{eq:vcl_max}
\end{equation}
For $\tau\to 0$ and $D=0$ this reduces to the classical bound
$\sqrt{2A d_{\mathrm{res}}}$: the barrier recovers the standard braking
certificate when reorientation is instantaneous, and depends explicitly on
$\phi$ otherwise.

\subsection{Safety Conditions and Conservatism}
\label{subsec:conservatism}

Eq.~\eqref{eq:escape_distance} is meaningful only when positive outward
acceleration is available after alignment, so we separate an authority
condition from a distance condition:
$h_{\mathrm{auth}}(\mathbf{x},u^-)
:=
a_{\mathrm{align}}(\mathbf{x},u^-)$,
and
$h_{\mathrm{dist}}(\mathbf{x},u^-)
:=
d_{\mathrm{res}}(\mathbf{p})
-
d_{\mathrm{esc}}(\mathbf{x},u^-)$.

The augmented state $(\mathbf{x},u^-)$ is certified as escape-recoverable
when
\begin{equation}
h_{\mathrm{auth}}(\mathbf{x},u^-)>0,
h_{\mathrm{dist}}(\mathbf{x},u^-)\geq 0.
\label{eq:escape_conditions}
\end{equation}
The first ensures braking is possible once aligned; the second that the
remaining clearance covers reorientation and braking. The first is inactive
over most of the envelope: by Eq.~\eqref{eq:aligned_authority} it fails only
when $f'_{\max}/m<g\,\mathbf{n}^{\top}\mathbf{e}_3+a_{\mathrm{dist}}^{\mathrm{bound}}$,
i.e.\ when the escape direction points steeply upward, so it becomes active for
ceilings and overhangs. It is retained because $d_{\mathrm{esc}}$ is undefined
once $a_{\mathrm{align}}\leq 0$: it keeps the distance condition well posed
rather than adding a safety requirement.

Let $\mathcal{E}$ be the set of augmented states from which some admissible
input trajectory drives $v_{\mathrm{cl}}$ to zero before $d_{\mathrm{res}}$ is
consumed, for all $\mathbf{w}\in\mathcal{W}$, with admissibility understood
through Assumption~\ref{asm:reorient}.

\begin{theorem}[Conservatism direction]
\label{thm:conservatism}
Under Assumption~\ref{asm:reorient}, on
$\mathcal{X}_{\mathrm{esc}}=\{(\mathbf{x},u^-):
a_{\mathrm{align}}(\mathbf{x},u^-)>0\}$, we have
\begin{equation}
\left\{
(\mathbf{x},u^-):
h_{\mathrm{dist}}(\mathbf{x},u^-)\geq 0
\right\}
\subseteq
\mathcal{E}.
\label{eq:soundness}
\end{equation}
Thus, the certified set does not overestimate escapability.
\end{theorem}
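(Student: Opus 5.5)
Fix $(\mathbf{x},u^-)\in\mathcal{X}_{\mathrm{esc}}$ with $h_{\mathrm{dist}}\geq 0$. I will exhibit one admissible input trajectory and show that, for every $\mathbf{w}\in\mathcal{W}$, it drives $v_{\mathrm{cl}}$ to zero before $d_{\mathrm{res}}$ is consumed. The trajectory has two phases. In the first, over $[0,\tau_{\mathrm{eff}}]$, apply any admissible input that rotates the thrust axis onto $\mathbf{n}$; Assumption~\ref{asm:reorient} guarantees this completes within $t_{\mathrm{rot}}(\phi)$, and the control delay $\tau_d$ is added because the first commanded input takes effect only after $\tau_d$. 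In the second, for $t\geq\tau_{\mathrm{eff}}$, hold $\mathbf{q}=\mathbf{n}$ at maximal thrust until $v_{\mathrm{cl}}=0$. The proof then compares the true normal relative motion $s(t):=\mathbf{n}^\top(\mathbf{p}-\mathbf{p}_o)$ with a scalar worst-case comparison system. Throughout I work in the closest-obstacle frame with $\mathbf{n}$ treated as fixed, consistent with Remark~\ref{rem:moving_obstacle}.

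\emph{Phase one: drift bound.} For any admissible thrust, the inward normal acceleration $-\mathbf{n}^\top\dot{\mathbf{v}}_{\mathrm{rel}}$ is at most $a_{\mathrm{drift}}$. This follows termwise. The thrust term gives $-\mathbf{n}^\top f\mathbf{q}/m\leq f/m$. Gravity contributes $g\,\mathbf{n}^\top\mathbf{e}_3\leq g(\mathbf{n}^\top\mathbf{e}_3)_+$. The disturbance contributes at most $a_{\mathrm{dist}}^{\mathrm{bound}}$ by Eq.~\eqref{eq:disturbance_projection}. Bounding the thrust magnitude by $f'_{\min}$ is the delicate point: during reorientation the escape policy should command the \emph{lowest} admissible thrust, so that the inward projection of thrust is at most $f'_{\min}/m$ for the one interval. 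To keep the argument honest I would state this as part of the constructed policy, namely minimal thrust while $\psi>\pi/2$. With this bound, integrating gives closing speed at most $v_{\mathrm{cl}}+D\tau$ and clearance consumed at most $v_{\mathrm{cl}}\tau+\tfrac12 D\tau^2$ at $t=\tau$. These are exactly the first two terms of Eq.~\eqref{eq:escape_distance}.

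\emph{Phase two: aligned braking.} With $\mathbf{q}=\mathbf{n}$ and thrust at $f'_{\max}$, the outward normal acceleration is at least $f'_{\max}/m-g\,\mathbf{n}^\top\mathbf{e}_3-a_{\mathrm{dist}}^{\mathrm{bound}}=A>0$ for every $\mathbf{w}$, since we are on $\mathcal{X}_{\mathrm{esc}}$. A standard comparison argument, applying Grönwall-free monotonicity to $\dot v\leq -A$, shows that the closing speed reaches zero within $(v_{\mathrm{cl}}+D\tau)/A$. The distance travelled in this phase is at most $(v_{\mathrm{cl}}+D\tau)^2/(2A)$. Here the monotonicity of the map $v\mapsto v^2/(2A)$ is used to propagate the phase-one upper bound on speed. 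Summing the two phases, the total clearance consumed is at most $d_{\mathrm{esc}}\leq d_{\mathrm{res}}$. This places $(\mathbf{x},u^-)$ in $\mathcal{E}$.

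\emph{Main obstacle.} The hardest step is justifying the bound on thrust magnitude during phase one. This requires making sure that holding thrust at $f'_{\max}$ in phase two is reachable over subsequent intervals. That is fine, because $f'_{\max}$ was computed from a one-step window and any later thrust can only increase toward $f_{\max}$; a larger thrust only improves $A$, so it suffices to use the thrust-rate bound monotonically. A second point needing care is whether Lemma~\ref{lem:budget} is required. I will not need it, because soundness uses Assumption~\ref{asm:reorient} directly with $\tau_{\mathrm{eff}}\geq t_{\mathrm{rot}}$. I will, however, remark that the lemma extends soundness to the identified response model $\psi(t)\approx\omega_{\max}\max\{0,t-t_{\mathrm{lag}}\}$.
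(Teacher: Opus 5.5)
Your decomposition is the paper's own: split at $\tau_{\mathrm{eff}}$, bound the inward growth of the closing speed by $a_{\mathrm{drift}}$ to get $v_{\mathrm{cl}}\tau+\tfrac12 D\tau^2$ and $v_e=v_{\mathrm{cl}}+D\tau$, brake at $A>0$ over $v_e^2/(2A)$, and sum to $d_{\mathrm{esc}}\leq d_{\mathrm{res}}$. The gap is at the step you flag as delicate, and the fix you propose does not work. Commanding the lowest admissible thrust does not keep the thrust magnitude at or below $f'_{\min}$ during the first interval. $f'_{\min}=\max\{f_{\min},f^--\dot f_{\max}\Delta t\}$ is the \emph{endpoint} of the one-step window, reached only after ramping down from $f^-$ at rate $\dot f_{\max}$. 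Moreover, under your own reading of $\tau_d$ nothing you command acts before $\tau_d$, so the old input $u^-$ is still applied on $[0,\tau_d]$. At $\phi=\pi$ with $f^->f_{\min}$, for instance, the thrust then contributes inward acceleration $f^-/m>f'_{\min}/m$. So Eq.~\eqref{eq:drift_accel} is not an upper bound on that initial stretch, and no policy choice can change this, because the rate limit itself forbids it. The paper's proof passes over the same step, and Remark~\ref{rem:transient} concedes it: with $a_{\mathrm{drift}}$ built on $f'_{\min}$, the inclusion holds only up to an explicitly bounded extra inward speed and the clearance it consumes. In the paper's accounting, which lets the ramp start at $t=0$, that excess is $\Delta v=\dot f_{\max}\Delta t^2/(2m)$; under your delay reading the uncovered stretch is longer. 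An exact argument needs either $f^-$ in place of $f'_{\min}$ in $a_{\mathrm{drift}}$, which the paper notes removes the caveat at the cost of a smaller certified set, or that error term carried into the conclusion. As written, your argument claims an exactness it does not deliver.

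Your `main obstacle' paragraph has a second flaw. You say $f'_{\max}$ is available in phase two because ``any later thrust can only increase toward $f_{\max}$'', but your own policy has just driven the thrust down toward $f'_{\min}$. Climbing back to $f'_{\max}$ takes up to $(f'_{\max}-f'_{\min})/\dot f_{\max}\leq 2\Delta t$, and during that climb the outward acceleration is below $A$. The natural repair is to raise thrust once $\psi\leq\pi/2$, where its normal projection is outward and the drift bound is untouched. You must then show the ramp fits before $\tau_{\mathrm{eff}}$, roughly $(\pi/2)/\omega_{\max}\geq 2\Delta t$ when $\phi>\pi/2$. That holds at the identified parameters ($0.52$~s versus $0.2$~s) but is not implied by the hypotheses. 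The paper's proof simply assumes $f'_{\max}$ once aligned, so this is something you must state rather than cite. Finally, bypassing Lemma~\ref{lem:budget} is legitimate: under Assumption~\ref{asm:reorient} alone, a rotation starting at $\tau_d$ is aligned by $\tau_d+t_{\mathrm{rot}}(\phi)=\tau_{\mathrm{eff}}$. But your closing remark conflicts with that reading. The lemma spends $\tau_d$ as slack for a rotation that begins at $t=0$, so if $\tau_d$ is a genuine pre-rotation delay, the lemma does not extend soundness to the identified response.
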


\begin{proof}
Split the maneuver at $\tau_{\mathrm{eff}}$. On $[0,\tau_{\mathrm{eff}})$ the
certificate credits no outward braking, and with $\mathbf{n}$ fixed the worst
inward growth of the closing speed is bounded by $a_{\mathrm{drift}}$, so
$v_{\mathrm{cl}}(t)\leq v_{\mathrm{cl}}+a_{\mathrm{drift}}t$; the clearance
consumed before braking is therefore at most
$v_{\mathrm{cl}}\tau_{\mathrm{eff}}+\tfrac12 a_{\mathrm{drift}}\tau_{\mathrm{eff}}^2$
and the speed at the start of braking at most
$v_{\mathrm{cl}}+a_{\mathrm{drift}}\tau_{\mathrm{eff}}$. By
Assumption~\ref{asm:reorient} with Lemma~\ref{lem:budget} the thrust is aligned
by $\tau_{\mathrm{eff}}$, and on $\mathcal{X}_{\mathrm{esc}}$ we have
$a_{\mathrm{align}}>0$, so the remaining speed is dissipated within
$(2a_{\mathrm{align}})^{-1}(v_{\mathrm{cl}}+a_{\mathrm{drift}}\tau_{\mathrm{eff}})^2$.
The two phases require exactly $d_{\mathrm{esc}}$ in
\eqref{eq:escape_distance}, so $h_{\mathrm{dist}}\geq 0$ gives
$d_{\mathrm{res}}\geq d_{\mathrm{esc}}$ and $(\mathbf{x},u^-)\in\mathcal{E}$.
\end{proof}

The witnessing maneuver is explicit: the \emph{reference escape policy}
$\pi_{\mathrm{esc}}$ slews the thrust axis toward $\mathbf{n}$ at
$\omega_{\max}$ while ramping the magnitude down to $f'_{\min}$, then switches
to $f'_{\max}$ once aligned. Theorem~\ref{thm:conservatism} certifies that
\emph{this} deliberately suboptimal maneuver completes within
$d_{\mathrm{res}}$. The certificate is conservative by construction --- no
braking credited during reorientation, the disturbance through its worst inward
projection, $\mathbf{n}$ held fixed. One term is charged in the opposite
direction, and we bound it.

\begin{remark}[Thrust-ramp transient]
\label{rem:transient}
$a_{\mathrm{drift}}$ is evaluated at the window endpoint $f'_{\min}$, but
$\pi_{\mathrm{esc}}$ needs up to one interval to ramp down from $f^-$. On that
sub-interval Eq.~\eqref{eq:drift_accel} is not an upper bound. The excess is
bounded: the additional inward speed is at most
\begin{equation}
\Delta v
=
\frac{\dot f_{\max}\Delta t^{2}}{2m},
\label{eq:ramp_dv}
\end{equation}
and the additional clearance it consumes is at most
$\Delta v\big[\tau_{\mathrm{eff}}+(2v_e+\Delta v)/(2a_{\mathrm{align}})\big]$
with $v_e:=v_{\mathrm{cl}}+a_{\mathrm{drift}}\tau_{\mathrm{eff}}$. At the
parameters of Sec.~\ref{subsec:setup} this gives $\Delta v=0.6~\mathrm{m/s}$
with $\tau_{\mathrm{eff}}=0.62~\mathrm{s}$ at $\phi=90^\circ$ and
$1.13~\mathrm{s}$ at $\phi=\pi$, so for the closing speeds of
Sec.~\ref{subsec:closed_loop} the clearance it consumes is a few tenths of a
metre: larger than $d_0$, but well inside the margins the controller actually
holds (Table~\ref{tab:results}). The term vanishes when $f^-=f'_{\min}$, and
replacing $f'_{\min}$ by $f^-$ removes the caveat unconditionally at the cost
of a uniformly smaller certified set. This is the one place where
Theorem~\ref{thm:conservatism} is exact only up to an explicitly bounded
term.
\end{remark}

The result is a pointwise escape certificate, not a recursive-feasibility or
forward-invariance guarantee: Eq.~\eqref{eq:escape_conditions} certifies that
an escape exists from the current augmented state.

\subsection{MPC Embedding}
\label{subsec:mpc}
The conditions depend on $u^-$, which fits predictive control naturally, the
decision variables of an MPC already forming an input sequence. At step $k$ we
set $u^-_{k|k}=u_{k-1}$, and for later nodes
\begin{equation}
u^-_{k+\ell|k}=u_{k+\ell-1|k},
\ell=1,\ldots,N.
\label{eq:previous_input_chain}
\end{equation}
Thus, the dependence on $u^-$ can be propagated along the horizon without
augmenting the system state.

We impose the authority and distance conditions as discrete-time CBF
constraints along the prediction horizon:
{\small
\begin{align}
h_{\mathrm{auth}}(\mathbf{x}_{k+\ell+1|k},u^-_{k+\ell+1|k})
&\geq
(1-\alpha_1)
h_{\mathrm{auth}}(\mathbf{x}_{k+\ell|k},u^-_{k+\ell|k}),
\label{eq:cbf_auth}
\\
h_{\mathrm{dist}}(\mathbf{x}_{k+\ell+1|k},u^-_{k+\ell+1|k})
&\geq
(1-\alpha_2)
h_{\mathrm{dist}}(\mathbf{x}_{k+\ell|k},u^-_{k+\ell|k}).
\label{eq:cbf_dist}
\end{align}
}
for $\ell=0,\ldots,N-1$, where $\alpha_1,\alpha_2\in(0,1]$ set how quickly the
two margins may decrease.

The resulting finite-horizon optimal control problem is
\begin{align}
\min_{\{u_{k+\ell|k}\}_{\ell=0}^{N-1}}
\quad
&
\sum_{\ell=0}^{N-1}
\left(
\|\mathbf{x}_{k+\ell|k}-\mathbf{x}^{\mathrm{ref}}_{k+\ell|k}\|_Q^2
+
\|u_{k+\ell|k}\|_R^2
\right)
\nonumber
\\
\text{s.t.}\quad & \text{\eqref{eq:quad_dynamics}, \eqref{eq:one_step_input_set}, \eqref{eq:previous_input_chain},}\nonumber\\&\ \ \text{\eqref{eq:cbf_auth}, \eqref{eq:cbf_dist}},\quad \ell=0,\dots,N-1.
\label{eq:mpc}
\end{align}

We freeze the obstacle normal along the horizon,
$\mathbf{n}_{k+\ell|k}=\mathbf{n}_{k|k}$, which keeps the escape quantities
closed form and is exact at the first node. The resulting constraints are
pointwise escape certificates at the predicted nodes, not a
recursive-feasibility guarantee.

\section{Experiments}
\label{sec:experiments}

\subsection{Experimental Setup}
\label{subsec:setup}

Closed-loop experiments run on a plant more detailed than the model used to
compute the certificate: a $13$-state quadrotor with first-order motor lag and
a cascaded attitude loop. The certificate is evaluated on
Eq.~\eqref{eq:quad_dynamics} from position, velocity, and realized thrust
direction.

The plant is integrated at a $2~\mathrm{ms}$ inner step; the MPC runs at
$\Delta t=0.10~\mathrm{s}$ over $N=20$ nodes with control delay
$\tau_d=0.10~\mathrm{s}$ and decay parameters
$(\alpha_1,\alpha_2)=(0.10,0.25)$. Thrust is constrained to
$[f_{\min},f_{\max}]=[2,22]~\mathrm{N}$ with
$\dot f_{\max}=120~\mathrm{N/s}$. Obstacles are inflated by
$d_{\mathrm{safe}}=0.35~\mathrm{m}$, so $d_{\mathrm{free}}$ is the signed
distance to the inflated boundary and
$d_{\mathrm{res}}=d_{\mathrm{free}}-d_0$, with $d_0=0.15~\mathrm{m}$, is what
the escape maneuver may consume. The gust bound is
$\|\mathbf{w}\|\leq 1.5~\mathrm{m/s^2}$.

The mass is $m=1.0~\mathrm{kg}$, so $a_{\max}=f_{\max}/m=22~\mathrm{m/s^2}$.
The reorientation parameters are identified, not assumed: thrust-direction step
commands over a grid of turning amplitudes, azimuths, and thrust levels are
fitted to $\psi(t)\approx\omega_{\max}\max\{0,t-t_{\mathrm{lag}}\}$, and over
$108$ responses this gives $\omega_{\max}=3.05~\mathrm{rad/s}$ and
$t_{\mathrm{lag}}=0.047~\mathrm{s}$. The worst single-interval turn under
full-slew commands is $0.156~\mathrm{rad}$, and $\theta$ is set to $90\%$ of
it, $\theta=0.141~\mathrm{rad}$ ($8.1^\circ$), the only tuning applied.

Scenarios are generated deterministically from a seed, so all controllers face
the same initial condition, obstacle motion, and disturbance sequence.
Continuous metrics use the Wilcoxon signed-rank test, binary outcomes the exact
McNemar test, all Holm--Bonferroni corrected. An episode counts as a success if
the vehicle reaches the goal region without contacting an obstacle within the
time limit; contact and timeout are reported separately where they differ.

\subsection{Baselines and Escape Referee}
\label{subsec:baselines}

All controllers share the same MPC implementation, tasks, constraints, and
disturbances, so the comparison isolates the certificate. B1 is a geometric
state-only MPC-CBF~\cite{zeng2021}, representing certification by instantaneous
clearance. B2 is a worst-case constant-margin method~\cite{mandaokar2026}
inflating the required clearance from global bounds on delay, braking, and
input variation, testing whether a fixed margin can replace a state-dependent
certificate.

B3 is a rate-feasible backup CBF flown
online~\cite{gurriet2018,chen2021,singletary2022}, and is the baseline that
matters most: it certifies exactly the quantity our closed form approximates,
integrating at each step the same backup maneuver the referee uses, and differs
from ours only in \emph{where} that condition is enforced. It shares the plant,
scenarios, seeds, disturbances, and backup maneuver of every other controller,
and is retuned for nothing.

An offline referee evaluates logged states, rolling out the same backup
maneuver and marking a state escapable if it recovers before the boundary. It
is an independent conservative audit, not a controller baseline. It defines the metric used throughout: writing
$\mathcal{C}_k$ for the event that a controller's own barrier certifies the
logged state at step $k$, and $\mathcal{R}_k$ for the event that the referee's
rollout recovers from it, the certified-but-inescapable (CBI) count is
\begin{equation}
\mathrm{CBI}=\textstyle\sum_k \mathbf{1}[\,\mathcal{C}_k \wedge \neg\mathcal{R}_k\,],
\label{eq:cbi}
\end{equation}
the steps a controller declared safe and the oracle could not escape. It is a
soundness counter, not a performance one: a collision the barrier never
certified contributes nothing. Because it commits to one fixed maneuver it is conservative, so
$\mathrm{CBI}=0$ gives the stronger inclusion
$\{\text{certified}\}\subseteq\{\text{referee-escapable}\}\subseteq\mathcal{E}$.

\begin{figure}[!tb]
\centering
\includegraphics[width=0.48\textwidth]{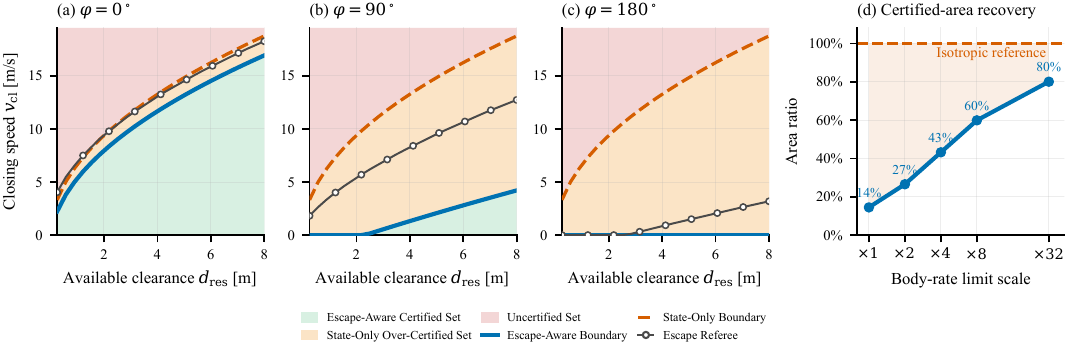}
\caption{Certified safe regions under thrust-direction misalignment.
(a)--(c) $\phi=0^\circ,90^\circ,180^\circ$: green is the escape-aware certified
set, orange the state-only over-certified set, gray the rollout referee. (d)
The worst-case certified area approaches the isotropic reference as the
body-rate limit grows.}
\label{fig:certified_sets}
\end{figure}

\subsection{Certified Sets under Thrust-Direction Misalignment}
\label{subsec:certified}

Fig.~\ref{fig:certified_sets} compares the state-only boundary, ours, and the
rollout referee in the $(d_{\mathrm{res}},v_{\mathrm{cl}})$ plane. At
$d_{\mathrm{res}}=4$~m the state-only barrier certifies $13.26$~m/s at every
misalignment; ours certifies $11.66$, $1.32$, and ${\approx}0$~m/s at
$\phi=0,90,180^\circ$.

The referee supplies the independent reference, and the comparison is the
quantitative form of Corollary~\ref{cor:gap_phi}. Its boundary falls from
$13.05$ to $8.26$ to $0.84$~m/s over the same angles: the state-only barrier is
essentially exact when the thrust already points outward ($1.02\times$ at
$\phi=0^\circ$) but over-certifies by $1.61\times$ at $90^\circ$ and
$15.8\times$ at $180^\circ$. The error is directional, not a constant offset,
and no direction-independent margin absorbs it. Our closed form errs in the
opposite direction, recovering $89\%$, $16\%$, and $0\%$ of the referee's
certified speed. The two errors are not commensurable --- over-certification
admits a state no input can escape, under-certification refuses an escapable
one at the cost measured in Sec.~\ref{subsec:closed_loop} --- so a certificate
is useful only when its error is one-sided: Theorem~\ref{thm:conservatism}
fixes the side, $\mathrm{CBI}=0$ confirms it. Correspondingly,
Fig.~\ref{fig:certified_sets}(d) shows the $\phi=180^\circ$ certified area
rising from $14.5\%$ to $80.0\%$ of the isotropic reference as the body-rate
limit scales $\times1$ to $\times32$; it saturates below $100\%$ because
$\tau_{\mathrm{eff}}$ retains $\tau_d+t_{\mathrm{lag}}=0.147$~s, which does not
scale with $\omega_{\max}$.

\begin{table}[t]
\centering
\caption{Closed-loop performance across three obstacle-avoidance scenarios.}
\label{tab:results}
\tiny
\setlength{\tabcolsep}{1.6pt}
\renewcommand{\arraystretch}{1.0}
\begin{tabular}{@{}llcccccccc@{}}
\toprule
Scenario & Controller & Runs & Success
& Min. clear. [m] & Path [m] & Time [s] & Solve [ms]
& CBI & $\Delta\phi$ [deg] \\
\midrule
S1
& \textbf{Ours} & 30 & 100\% & 3.26 [2.89, 3.51] & 39.6 & 8.2 & 46.6 & 0 & 83.2 \\
& Ours$_{\ell=0}$\textsuperscript{$\ddagger$} & 30 & 57\%\textsuperscript{**} & 0.21 [$-0.30$, 0.39]\textsuperscript{***} & 24.9 & 9.0 & 35.4 & 9 & 9.4 \\
& B1 & 30 & 93\% & 2.16 [2.03, 2.36]\textsuperscript{***} & 32.5 & 6.3 & 51.7 & 0 & 55.3 \\
& B2\textsuperscript{$\dagger$} & 30 & 93\% & 4.78 [4.49, 4.96] & 40.4 & 8.0 & 34.8 & 0 & 91.6 \\
& B3 & 30 & 93\% & 0.80 [0.57, 1.27]\textsuperscript{***} & 29.3 & 5.8 & 36.3 & 0 & 9.0 \\
\midrule
S2
& \textbf{Ours} & 30 & 100\% & 1.78 [1.69, 1.87] & 33.5 & 5.8 & 36.7 & 0 & 34.4 \\
& Ours$_{\ell=0}$ & 30 & 97\% & 0.74 [0.62, 0.85]\textsuperscript{***} & 44.8 & 10.3 & 27.5 & 0 & 22.4 \\
& B1 & 30 & 97\% & 1.46 [1.40, 1.53]\textsuperscript{***} & 32.8 & 6.8 & 39.2 & 0 & 31.6 \\
& B2\textsuperscript{$\dagger$} & 30 & 0\%\textsuperscript{***} & $-0.48$ [$-0.64$, $-0.27$]\textsuperscript{***} & 4.7 & $-$ & 1213.0 & 0 & $-$ \\
& B3 & 30 & 80\% & 0.10 [0.02, 0.24]\textsuperscript{***} & 28.4 & 6.6 & 43.2 & 0 & 2.2 \\
\midrule
S3
& \textbf{Ours} & 50 & 100\% & 2.03 [1.86, 2.20] & 38.8 & 8.5 & 42.6 & 0 & 77.6 \\
& Ours$_{\ell=0}$ & 50 & 100\% & 1.38 [1.27, 1.53]\textsuperscript{***} & 40.4 & 8.9 & 23.5 & 0 & 66.3 \\
& B1 & 50 & 100\% & 1.37 [1.24, 1.50]\textsuperscript{***} & 37.1 & 8.2 & 29.8 & 0 & 69.7 \\
& B2 & 50 & 100\% & 1.51 [1.38, 1.76]\textsuperscript{***} & 36.0 & 8.8 & 29.8 & 0 & 77.6 \\
& B3 & 50 & 100\% & 0.67 [0.59, 0.74]\textsuperscript{***} & 35.1 & 8.1 & 27.8 & 0 & 48.9 \\
\bottomrule
\end{tabular}
\vspace{0.2em}
\begin{minipage}{\linewidth}
\scriptsize
Median [IQR] over paired seeds; markers on baseline rows denote comparison
against ours after correction. CBI counts controller-certified but
referee-unsafe steps; $\Delta\phi$ is the directional margin (larger is safer).
Success-rate differences against B3 are not significant at these sample sizes;
clearance, band-occupancy and $\Delta\phi$ differences are.
Ours$_{\ell=0}$ is ours with \eqref{eq:cbf_auth}--\eqref{eq:cbf_dist} imposed
only between nodes $0$ and $1$; every other setting is unchanged.
\textsuperscript{$\dagger$}More than half infeasible steps; excluded from
$\Delta\phi$. \textsuperscript{$\ddagger$}$8/30$ episodes largely infeasible
(Sec.~\ref{subsec:ablation}). \textsuperscript{**}$p<0.01$,
\textsuperscript{***}$p<0.001$.
\end{minipage}
\end{table}

\begin{figure}[htbp]
\centering
\includegraphics[width=\linewidth]{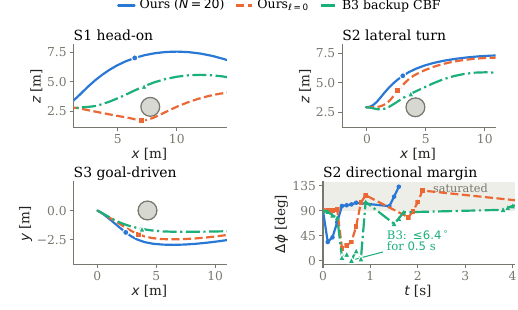}
\vspace{-5mm}
\caption{One episode per scenario (seed $1$) around the closest approach. The disc is
the obstacle at that instant; markers are each controller's own closest
approach. Panels share a window size and are projected on the plane where
avoidance occurs ($x$--$z$ for S1 and S2, $x$--$y$ for S3). Bottom right:
directional margin over the S2 encounter --- B3 spends half a second at or
below $6.4^\circ$, ours never falls below $33.8^\circ$; shading marks samples
where escape survives at any misalignment.}
\label{fig:traj_all}
\vspace{-5mm}
\end{figure}

\subsection{Closed-Loop Behavior}
\label{subsec:closed_loop}
\begin{figure}[htbp]
  \centering
  \begin{subfigure}[b]{0.32\columnwidth}
    \includegraphics[width=\linewidth]{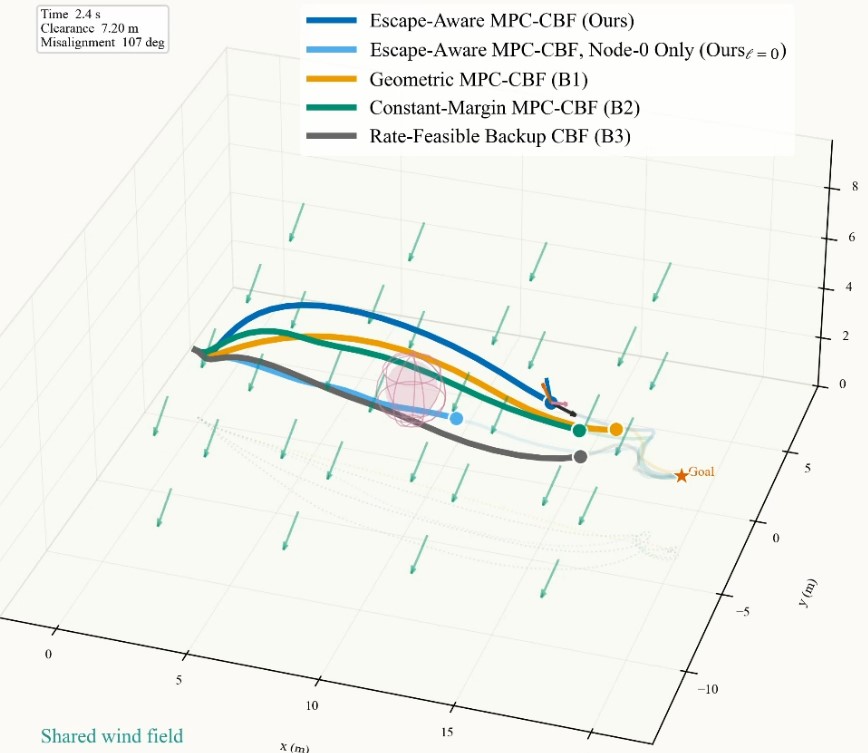}
    \caption{}\label{fig:traj_s1}
  \end{subfigure}\hfill
  \begin{subfigure}[b]{0.32\columnwidth}
    \includegraphics[width=\linewidth]{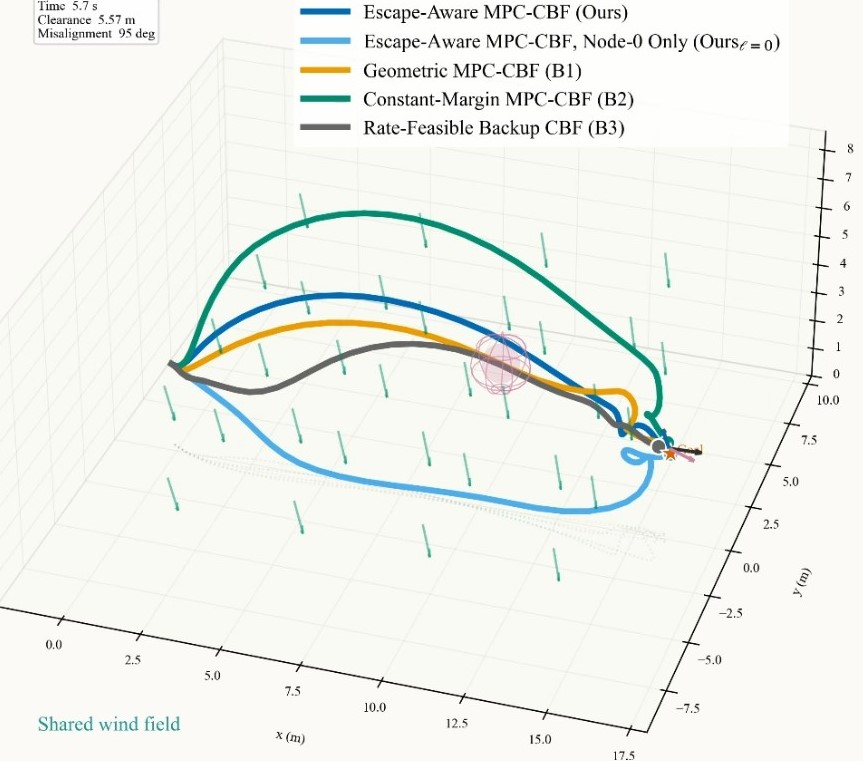}
    \caption{}\label{fig:traj_s2}
  \end{subfigure}\hfill
  \begin{subfigure}[b]{0.32\columnwidth}
    \includegraphics[width=\linewidth]{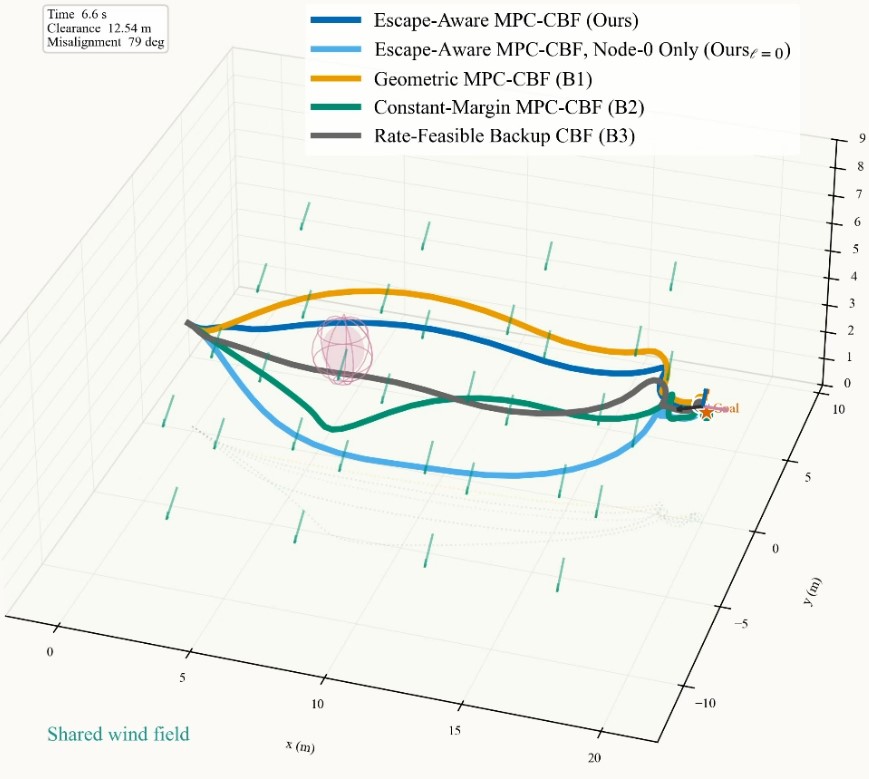}
    \caption{}\label{fig:traj_s3}
  \end{subfigure}
  \vspace{-3mm}
  \caption{Trajectories under the time-varying wind field.
  (a) S1: head-on encounter. (b) S2: turning obstacle.
  (c) S3: goal beyond the obstacle.}
  \label{fig:traj}
\end{figure}

S1 is a head-on encounter with a constant-velocity obstacle, matching
Proposition~\ref{prop:gap}; S2 circumnavigates a turning obstacle; S3 places
the goal beyond the obstacle, so goal tracking alone drives the vehicle toward
the state-only gap (Fig.~\ref{fig:traj}). All scenarios run under a strong
wind field whose magnitude and direction vary continuously over time. The
controller has no access to the instantaneous wind vector and uses only the
bound $\|\mathbf{w}\|\leq 1.5~\mathrm{m/s^2}$ of Sec.~\ref{subsec:setup}.

Table~\ref{tab:results} reports the outcome. Our controller completes every
episode, which B1 does not ($93\%$ in S1, $97\%$ in S2), and holds a larger
minimum clearance than B1 throughout ($3.26$ vs
$2.16$~m, $1.78$ vs $1.46$~m, $2.03$ vs $1.37$~m, all significant after
correction) and a larger directional margin ($83.2^\circ$ vs $55.3^\circ$,
$34.4^\circ$ vs $31.6^\circ$, $77.6^\circ$ vs $69.7^\circ$). It never enters
the Proposition~\ref{prop:gap} band, where B1 spends $7.5\%$ of its closing
steps in S1, $6.5\%$ in S3, and none in S2. The band is evaluated at $\phi=\pi$ on geometry
alone, so by Corollary~\ref{cor:gap_phi} occupancy is necessary but not
sufficient for being certified yet inescapable --- it measures how often a
state-only barrier \emph{can} be wrong, and S1 and S3 are also the scenarios in
which $h_I$ itself fails in closed loop (Sec.~\ref{subsec:ablation}).

The margin is not free, but the cost is narrower than the certificate itself
suggests. Against B1 we travel $22\%$ farther and take $30\%$ longer in S1;
in S2 the paths are within $2\%$ and we in fact reach the goal sooner
($5.8$ vs $6.8$~s), and in S3 both gaps are under $5\%$. What remains is in
the nonlinear program, not the algebraic certificate: our median solve time is
$10\%$ and $6\%$ \emph{below} B1's in S1 and S2 and $43\%$ above it in S3,
and the largest median here, $46.6$~ms, sits well inside the $100$~ms control
period.

B3 carries the contribution, enforcing the same escape condition and differing
only in \emph{where}. Its median clearance is $0.80$, $0.10$, and $0.67$~m
against our $3.26$, $1.78$, and $2.03$ ($p_{\mathrm{adj}}<10^{-7}$), and its
directional margin collapses to $9.0^\circ$, $2.2^\circ$, and $48.9^\circ$,
reaching $0^\circ$ in the worst S1 and S2 episodes (Fig.~\ref{fig:traj_all}).
It occupies the gap band more than B1, reaches $37$ steps the referee marks
inescapable, and collides in $6.7\%$ of S1 and $20\%$ of S2 episodes; at $30$
paired seeds those rates are descriptive, while the clearance, band, and
$\Delta\phi$ differences are significant. The mechanism is not cost: B3 solves in $0.65$--$1.18\times$ our median time,
cheaper than us in S1 and S3 and \emph{dearer} in S2 --- the scenario where it
collides most --- and its condition is the stronger one; what it cannot do
is enforce it anywhere but the state already reached, by which point the thrust
is misaligned and the clearance committed.

B2 needs separate reading: its globally tuned margin gives the largest S1
clearance ($4.78$~m) yet still collides in $6.7\%$ of episodes, and in S2 it is
infeasible from the \emph{first} step, describing a controller that never acts.
No single direction-independent margin serves all three; adapting it online is
the remedy of~\cite{liu2026adaptive}, and our result says what it must adapt
to.

The CBI count is zero for every controller except the node-$0$ ablation, and
reading it correctly matters. For B1 the zero holds even inside the gap band:
those excursions occurred at sub-worst-case misalignment, so over-certification
showed up as eroded margin rather than as an inescapable state. For B3 the same
zero coexists with $37$ inescapable steps, because B3's certificate \emph{is}
the rollout --- it never certifies such a state and is simply left with nothing
admissible. Those are feasibility, not soundness, failures, and they are the
point.

\begin{table}[htbp]
\centering
\caption{Internal ablations, $20$ paired seeds, locus held at all $N$ nodes.
Rows~3--5 switch one modeling choice of our own controller; row~2 collapses the
certificate to the state-only barrier $h_I$. Everything else is unchanged.}
\label{tab:ablation}
\vspace{-2mm}
\footnotesize
\setlength{\tabcolsep}{3.5pt}
\renewcommand{\arraystretch}{1.05}
\begin{tabular}{@{}lccccccc@{}}
\toprule
& \multicolumn{3}{c}{Min. clearance [m]} & & \multicolumn{3}{c}{Success [\%]} \\
\cmidrule(r){2-4}\cmidrule(l){6-8}
Variant & S1 & S2 & S3 & & S1 & S2 & S3 \\
\midrule
Ours, unmodified            & $3.21$ & $1.77$ & $2.05$ & & $100$ & $100$ & $100$ \\
$h_I$ at all $N$ nodes$^{\S}$ & $2.02$ & $0.51$ & $1.58$ & & $85$ & $100$ & $75$ \\
No directional reachability & $2.58$ & $1.30$ & $1.33$ & & $100$ & $100$ & $100$ \\
Constant one-step authority & $-0.43$ & $-0.35$ & $1.55$ & & $0$ & $0$ & $0$ \\
Worst-case drift charged    & $1.97$ & $-0.26$ & $1.56$ & & $55$ & $35$ & $100$ \\
\bottomrule
\multicolumn{8}{@{}p{\columnwidth}@{}}{\scriptsize $^{\S}$Eq.~\eqref{eq:escape_barrier}
with $\theta\!\to\!\pi$, $\tau_d\!=\!t_{\rm lag}\!=\!0$, $\mathcal{W}\!=\!\{0\}$,
$f'_{\max}\!=\!f_{\max}$, which collapses $h_{\rm esc}$ to $h_I$ of
Eq.~\eqref{eq:state_only_barrier}: the barrier of Prop.~\ref{prop:gap},
imposed at every node rather than none.}\\
\end{tabular}
\vspace{-4mm}
\end{table}

\subsection{Ablation Study}
\label{subsec:ablation}

A certificate is fixed by \emph{what} it computes and \emph{where} it is
imposed; B1 and B3 change both. Table~\ref{tab:ablation} varies the first at a
fixed locus of all $N$ nodes.

\textbf{Content.} Row~2 collapses Eq.~\eqref{eq:escape_barrier} to $h_I$ --- four
terms at once, the comparison Proposition~\ref{prop:gap} predicts rather than a
one-term ablation --- and is granted our own locus, not the single node at which
a stopping barrier is normally applied. It still fails $15\%$ of S1 and $25\%$
of S3 while completing all of S2, exactly the scenarios in which B1 enters the
Proposition~\ref{prop:gap} band ($7.5\%$, $6.5\%$ of closing steps, none in
S2), and completes S2 on $0.51$~m against our $1.77$. Row~3 restores every term
but the direction cap and recovers completion at a cost of $20$--$35\%$ of the
clearance: the reorientation and drift terms keep the vehicle feasible, the cap
keeps the margin. Row~4 freezes $\mathcal{A}_1(u^-)$ through braking and
completes no episode anywhere, stalling where it cannot crash ($-0.43$,
$-0.35$~m in S1--S2; $1.55$~m and no completion in S3) --- a certificate made
unsatisfiable rather than unsafe. Row~5 runs against us: charging the transient
of Remark~\ref{rem:transient} unconditionally, the variant needing no caveat in
Theorem~\ref{thm:conservatism}, forfeits $45\%$ and $65\%$ of S1 and S2
episodes. That $h_I$ fails where the weaker barrier of B1 does not is the
mechanism, not a contradiction: imposing $h_{k+1}\geq(1-\alpha)h_k$ on
$d_{\mathrm{free}}$ caps the closing speed near
$\alpha d_{\mathrm{free}}/\Delta t$, below the
$\sqrt{2a_{\max}d_{\mathrm{free}}}$ that $h_I$ licenses over the $1$--$4$~m
clearances here. A geometric barrier is conservative by accident; $h_I$
authorizes exactly the fast approach in which reorientation time binds.

\textbf{Locus.} Holding certificate, horizon, cost, weights, decay rates, and
solver fixed, we impose Eqs.\eqref{eq:cbf_auth}--\eqref{eq:cbf_dist} only between
nodes $0$ and $1$ (Table~\ref{tab:results}, Ours$_{\ell=0}$): clearance falls
from $3.26$, $1.82$, $2.03$~m to $0.21$, $0.74$, $1.38$ against B3's $0.80$,
$0.10$, $0.67$, and the S1 margin from $83.2^\circ$ to $9.4^\circ$ against B3's
$9.0^\circ$. The same condition loses the margin when checked only at the state
already reached which is the effect B3 entangles. Excluding the eight fallback episodes
leaves $0.31$~m and $16.4^\circ$, and it is the only variant with nonzero CBI
($9$ steps, all in colliding episodes): no violation of
Theorem~\ref{thm:conservatism}, the referee being conservative in that
direction, but only this locus reaches where certificate and referee disagree.
The converse cell is closed by structure : a rollout has no analytic derivative,
so with $186$ variables $20$ such constraints cost
$\mathcal{O}(10^{3})$ integrations per Jacobian at every interior-point
iteration, against B3's one rollout per step.

\section{Conclusion}
\label{sec:conclusion}

We studied quadrotor safety certification under body-rate-limited thrust
reorientation, showing that state-only input-constrained barriers over-certify
because they ignore the previously applied thrust direction, and that the gap
has a closed form vanishing as the attainable body rate grows. We addressed it
with an escape-aware certificate on $(\mathbf{x},u^-)$ that evaluates one-step
reachable thrust authority, charges reorientation time, and admits a
closed-form barrier and an analytic inverse for the maximum certifiable closing
speed.

On a 13-state quadrotor the controller completes every tested scenario while
holding larger clearance and directional margin than a geometric MPC-CBF, stays
feasible where a constant margin is infeasible from the first step, and is
never certified-but-inescapable under an independent conservative backup-CBF rollout audit. Against that same rollout flown as a controller it holds significantly
larger clearance and margin at no more than $2.2\times$ the solve time, which
isolates the contribution: both enforce the same escape condition, and the
closed form is what lets it be enforced across a horizon rather than at one
state. Ablations confirm that the stopping barrier $h_I$, enforced over that
same horizon, still fails $15$--$25\%$ of episodes in the two scenarios that
occupy the gap band, that directional reachability is essential, and that a
constant-authority model is unusable.

A key limitation is the lack of hardware validation. The soundness of the proposed certificate depends on Assumption~\ref{asm:reorient}, which reduces the attitude dynamics to a single effective slew rate. This assumption is identified and validated only on the simulated plant considered in this work~\cite{shi2025wholebody}; its validity on physical hardware remains to be established.

\def\IEEEbibitemsep{-0.3ex plus .2pt}
\bibliographystyle{IEEEtran}
\bibliography{references}

\end{document}